\documentclass{article}

\usepackage{arxiv}

\usepackage[utf8]{inputenc} 
\usepackage[T1]{fontenc}    
\usepackage{hyperref}       
\usepackage{url}            
\usepackage{booktabs}       
\usepackage{amsfonts}       
\usepackage{nicefrac}       
\usepackage{microtype}      
\usepackage{lipsum}		
\usepackage{graphicx}
\usepackage{natbib}
\usepackage{doi}

\usepackage{amsmath}
\usepackage{amsfonts}
\usepackage{multirow}
\usepackage{amsthm}
\usepackage{cleveref}
\usepackage{threeparttable}
\usepackage{amsthm} 
\newtheorem{thm}{Theorem}
\newtheorem{prop}[thm]{Proposition}

\title{Trust the Prior (or Not):\\Uncertainty-Aware Abdominal Aortic Aneurysm Segmentation}

\author{{Erich Robbi} \\
	Department of Computer Science\\
	University of Trento\\
	Trento, Italy \\
	\texttt{erich.robbi@unitn.it} \\
	\And
	{Daniele Ravanelli} \\
	Medical Physics Unit\\
	Santa Chiara Hospital\\
	Trento, Italy \\\texttt{daniele.ravanelli@asuit.tn.it} \\
	\And
    {Andrea Passerini} \\
	Department of Computer Science\\
	University of Trento\\
	Trento, Italy \\
	\texttt{andrea.passerini@unitn.it} \\
}

\renewcommand{\shorttitle}{Uncertainty-Aware Abdominal Aortic Aneurysm Segmentation}

\hypersetup{
    pdftitle={Trust the Prior (or Not): Uncertainty-Aware Abdominal Aortic Aneurysm Segmentation},
    pdfsubject={Medical Imaging},
    pdfauthor={Erich Robbi, Daniele Ravanelli, Stefano Bonvini, Annalisa Trianni, Andrea Passerini},
    pdfkeywords={Abdominal Aortic Aneurysm, Thrombus Segmentation, Uncertainty Quantification, OOD Generalisation},
}

\begin{document}
\maketitle

\begin{abstract}
	Robust segmentation of intraluminal thrombus is critical for risk assessment in Abdominal Aortic Aneurysm, yet it remains challenging due to heterogeneous thrombus features and low contrast with surrounding non-enhanced tissues. Domain shifts induced by different Computed Tomography Angiography (CTA) protocols further inhibit multi-center generalization of deep learning models. To address these challenges, we propose a patient-specific framework that integrates discriminative learning with anatomically informed priors. Our approach introduces two key components: (1) a patient-specific intensity normalization based on a Gaussian Mixture Model of local anatomy, and (2) an Uncertainty-Gated Anatomical Attention module that incorporates spatial priors while adaptively modulating their influence according to voxel-wise confidence. This design allows for anatomical guidance in ambiguous regions while suppressing unreliable priors. The proposed method achieves state-of-the-art performance on in-distribution test data and substantially outperforms existing alternatives in generalization to external multi-center CTA data, while remaining interpretable through an explicit separation of visual and anatomical evidence.
\end{abstract}

\keywords{Abdominal Aortic Aneurysm \and Thrombus Segmentation \and Uncertainty Quantification \and OOD Generalisation}

\section{Introduction}
Accurate segmentation of intraluminal thrombus (ILT) in patients with abdominal aortic aneurysm (AAA) is crucial for risk assessment \cite{wanhainen2024editor}. However, obtaining reliable ILT segmentation in pre-operative computed tomography angiography scans (CTAs) poses significant challenges \cite{wang2022fully}, due to insufficient contrast in Hounsfield unit (HU) values between the thrombus and adjacent abdominal structures (Figure \ref{fig:fig1}), leading to indistinct boundaries and a high rate of misclassification \cite{hwang2022automatic}. These challenges are further aggravated by the substantial inter- and intra-patient variability \cite{abdolmanafi2023deep}. 
Deep learning has improved automated ILT segmentation \cite{guo2025automatic,xu2025artificial}. 
Early methods primarily relied on computationally efficient 2D architectures, including FCN- and U-Net-based models, further enhanced through 3D reconstruction, multimodal fusion, hybrid priors, and multi-view/2.5D integration strategies \cite{lopez2018fully,wang2018neural,lopez2017dcnn,caradu2021fully,lareyre2021automated,jung2022abdominal,abdolmanafi2023deep,hwang2022automatic,wang2022fully}, achieving high Dice similarity coefficients (DSC). To enhance volumetric consistency, 3D models, especially those based on 3D U-Net and its variants, were later introduced \cite{kongrat2022reconstruction,mu2023automatic,kim2024computed,lyu2024automatic,robbi2025automatic,zhang2025sdlu}, reporting DSC values between .804 and .987 in well-curated datasets. Furthermore, sophisticated frameworks have tackled the challenges of postoperative aortic repair imaging using Mask R-CNN and Bi-CLSTM architectures \cite{hwang2022automatic,jung2022abdominal}. Nevertheless, purely data-driven models often face difficulties with out-of-distribution (OOD) data \cite{guo2025automatic}. This domain shift arises from center-specific acquisition protocols, reconstruction settings, and contrast timing, which introduce systematic biases in HU distributions \cite {baliyan2019vascular}.
External validation studies have indicated that segmentation errors often arise from misidentification of aneurysmal tissue with respect to adjacent organs \cite{hatzl2024external}. 

\begin{figure*}[t]
\centering
\includegraphics[width=1\textwidth]{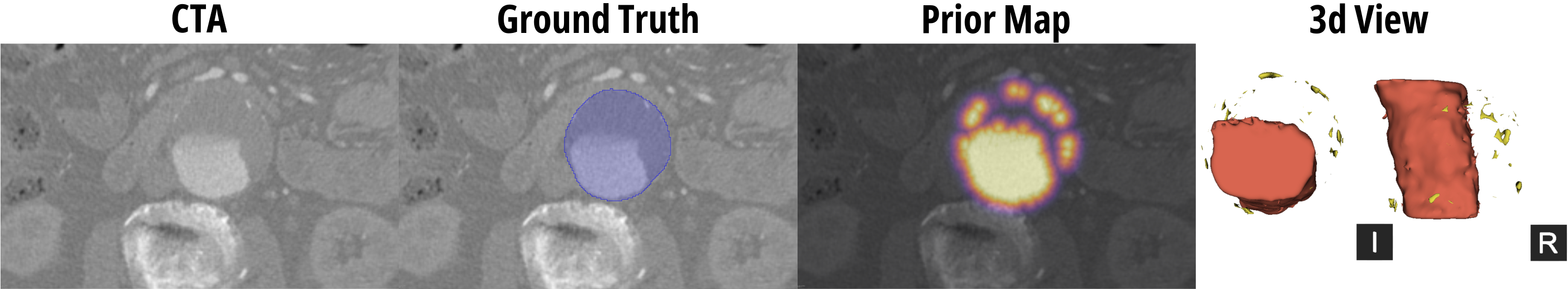}
\caption{CTA in which the thrombus boundary detection is non-trivial, along with the ground truth (blue labeled) and the intuition of using calcifications as positive inductive bias to guide the network shown in axial and 3D views (red: lumen; yellow: calcifications)} \label{fig:fig1}
\end{figure*}

To overcome these challenges, we propose a patient-specific framework combining deep learning with adaptive anatomical priors as shown in Figure \ref{fig:fig1}. Unlike conventional intensity normalisation or fixed anatomical priors, our approach adapts both intensity statistics and spatial constraints at the individual patient level, while explicitly suppressing prior influence under uncertainty. First, we conduct a patient-specific normalisation of HU values obtained from a Gaussian Mixture Model (GMM) representing the local vascular anatomy. Using a pretrained network to automatically detect the contrast-enhanced aortic lumen and the inferior vena cava (IVC), we derive subject-specific HU distributions for vascular structures, which help to minimize inter-scan intensity variability and alleviate domain shift effects. Secondly, we present an Uncertainty-Gated Anatomical Attention (UGAA) module that incorporates spatial constraints based on patient anatomy into the segmentation process. Importantly, the effect of the anatomical prior is adjusted by a probabilistic confidence measure. This design prevents erroneous anatomical bias in challenging regions, allowing the network to rely on learned image features when prior confidence is low. 

We validate our method on the In-distribution (ID) test set and further assess generalizability using a multi-center OOD set of CTAs. Our method achieves (SOTA) performance in comparison to existing techniques, including those provided by data contributors, while exhibiting robustness to OOD CTAs.

In addition to accuracy, the clear breakdown of anatomical evidence and uncertainty offers inherent clinical interpretability. The primary contributions of this research are summarised as follows:

\begin{enumerate}
    \item We introduce a patient-specific deep learning framework for robust intraluminal thrombus segmentation in AAAs from CTA images.
    \item We present an innovative HU normalisation technique based on patient-specific vascular structures to address domain shift issues.
    \item We propose the UGAA module that dynamically incorporates anatomical priors informed by probabilistic confidence levels.
    \item We showcase improved accuracy and generalization across both ID and OOD datasets, while also delivering interpretable anatomical insights.
\end{enumerate}

\section{Methodology}

\begin{figure*}[t]
\centering
\includegraphics[width=1\textwidth]{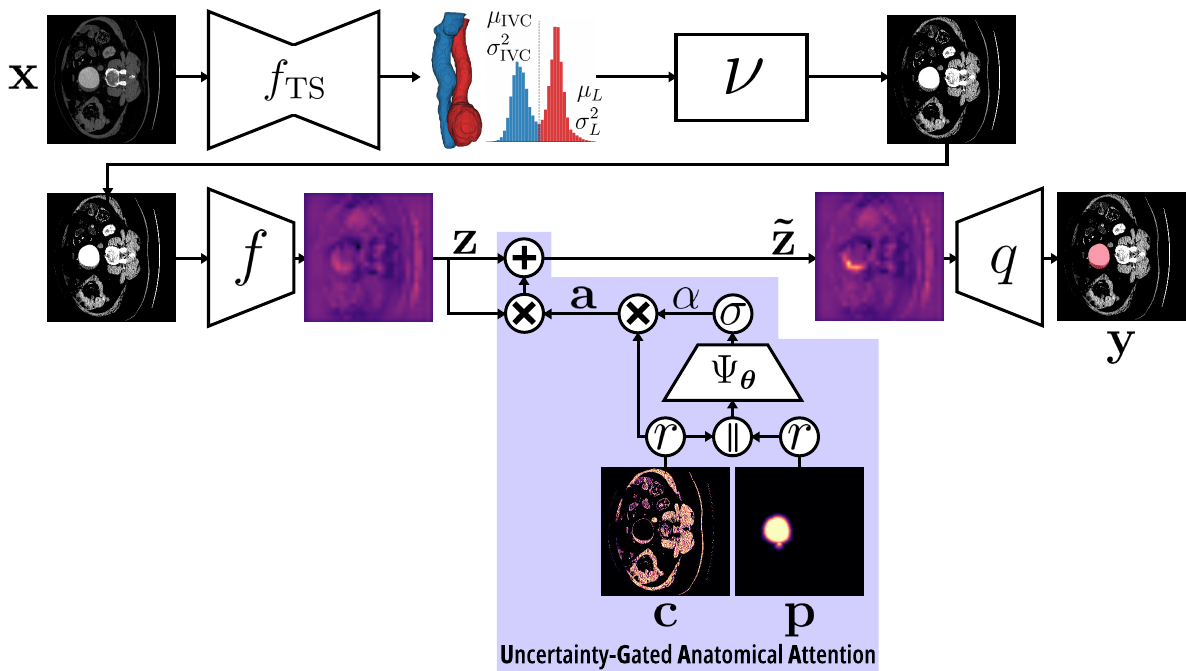}
\caption{Overview of the proposed patient-specific segmentation pipeline. A CTA $\mathbf{x}$ is first normalised via $\nu$, then processed by an encoder–decoder network incorporating the UGAA module at the bottleneck to integrate the anatomical prior $\mathbf{p}$ under uncertainty modulation $\mathbf{c}$.}
\label{fig:pipeline}
\end{figure*}

Our method performs patient-specific segmentation of AAA thrombus in contrast-enhanced CTAs by combining a discriminative segmentation network with anatomically informed priors as shown in
Figure \ref{fig:pipeline}. First, to handle differences between scans and reduce domain shift, we normalize the volume using a linear scaling based on the intensity statistics of non-contrasted tissue ($\nu$) localised by a pre-trained model $f_\text{TS}$ (Figure \ref{fig:pipeline}, first row). We then exploit the predictions of $f_\text{TS}$ to generate and incorporate the anatomical priors $p$ to guide the segmentation, which is dynamically weighted using a confidence map $c$ that considers both overall image quality and local uncertainty. Both $c$ and $p$ are downsampled by the operator $r$. The downsampled spatial prior is injected into the network's latent representations $\textbf{z}$ through an uncertainty-gated anatomical attention module (highlighted in light-blue), allowing the model to rely on prior information only when it is trustworthy (Figure \ref{fig:pipeline}, second row). Finally, to make the results more interpretable, the model’s predictions can be broken down into visual (texture-based) and anatomical (prior-based) evidence, helping clinicians understand what guided each segmentation decision. The different elements of the pipeline are further detailed below. 
\subsection{Patient-specific Normalisation}\label{subsec:norm}
Let $\Omega \subset \mathbb{R}^3$ denote the spatial domain of a CTA volume and $\mathbf{v} \in \Omega$ a voxel location. We represent the input CTA as $\mathbf{x} \in \mathbb{R}^{h \times w \times d}$, where $x_{\mathbf{v}} \in \mathbb{R}$ denotes the intensity at voxel $\mathbf{v}$, and $t \in \mathbb{R}$ a generic intensity value independent of spatial position.

Intensities across CTAs are not directly comparable across patients due to scanner hardware, acquisition protocols, and contrast timing, leading to distribution shifts. In the following, we describe a simple yet effective patient-specific normalisation procedure that is robust to scanner and protocol variability.

We adopt as reference structures 
anatomical regions that are relatively easy to localise across CTAs: the inferior vena cava (IVC) and lumen (L). The IVC, contrary to L, contains non-contrasted blood, so its intensities are similar to those of the thrombus. We assume that there exists a pre-trained model that is able to predict a coarse localisation, and then estimate intensity statistics of both thrombus and contrast medium. TotalSegmentator~\cite{wasserthal2023totalsegmentator}, indicated by $f_\text{TS}$ in our pipeline in Figure \ref{fig:pipeline}, was used to perform this task. Intensities of IVC and L are modeled as a Gaussian Mixture Model (GMM), whose parameters $\mu_{\text{IVC}},\sigma^2_{\text{IVC}}$ and $\mu_{\text{L}},\sigma^2_{\text{L}}$ are estimated via Expectation-Maximisation. We then define the normalisation operator $\nu$ acting on intensity $t$ as 
\begin{equation}
\label{eq:norm}
\nu(t)=\min\!\left(1,\max\!\left[0,\frac{t-(\mu_{\text{IVC}}-2\sigma_{\text{IVC}})}{4\sigma_{\text{IVC}}}\right]\right)
\end{equation}

\begin{prop}
    Assuming thrombus intensities follow the non-contrasted IVC distribution, the operator $\nu$ acts as an affine transformation that maps intensities to a standardized, clipped normal distribution $\mathcal{N}_{[0,1]}\sim\left(\frac{1}{2},\frac{1}{16}\right)$.
\end{prop}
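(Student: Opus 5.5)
We first isolate the affine part of $\nu$ and then account for the clipping. Write $\nu = \kappa\circ a$, where $a(t)=\frac{t-(\mu_{\text{IVC}}-2\sigma_{\text{IVC}})}{4\sigma_{\text{IVC}}} = \frac{1}{4\sigma_{\text{IVC}}}\,t + \frac{2\sigma_{\text{IVC}}-\mu_{\text{IVC}}}{4\sigma_{\text{IVC}}}$ and $\kappa(s)=\min(1,\max(0,s))$ is the projection onto $[0,1]$. The map $a$ is affine with positive slope $1/(4\sigma_{\text{IVC}})$, since $\sigma_{\text{IVC}}>0$ for any non-degenerate EM fit. So on the preimage $a^{-1}([0,1])=[\mu_{\text{IVC}}-2\sigma_{\text{IVC}},\,\mu_{\text{IVC}}+2\sigma_{\text{IVC}}]$, the operator $\nu$ coincides with an increasing affine transformation. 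This gives the first half of the claim.

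Next we push the distributional assumption through $a$. Let $T\sim\mathcal{N}(\mu_{\text{IVC}},\sigma^2_{\text{IVC}})$ be a thrombus intensity, taking the GMM component parameters as the true ones. Affine maps preserve Gaussianity, so $a(T)$ is normal with mean $a(\mu_{\text{IVC}})=\frac{2\sigma_{\text{IVC}}}{4\sigma_{\text{IVC}}}=\frac12$ and variance $\sigma^2_{\text{IVC}}/(16\sigma^2_{\text{IVC}})=\frac1{16}$. Hence $a(T)\sim\mathcal{N}(\frac12,\frac1{16})$, with standard deviation $\frac14$, independently of the patient-specific $\mu_{\text{IVC}},\sigma_{\text{IVC}}$. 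This independence is the "standardized" part of the claim: every scan's thrombus distribution is sent to the same law.

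Finally, applying $\kappa$ gives $\nu(T)=\kappa(a(T))$, which is by definition the clipped normal $\mathcal{N}_{[0,1]}(\frac12,\frac1{16})$. It has the Gaussian density on $(0,1)$ and point masses $\Phi(-2)\approx0.023$ at each endpoint, since $[0,1]$ lies two standard deviations on either side of $\frac12$. We would add this remark to show that about $95.4\%$ of the thrombus mass lies in the linear regime.

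The main obstacle is interpretive rather than technical. The notation $\mathcal{N}_{[0,1]}(\frac12,\frac1{16})$ must be read as the law of the clipped variable, meaning censoring with atoms at the endpoints. Its parameters are those of the underlying Gaussian, not the moments of the clipped variable. A truncated (renormalised) normal would be the wrong reading, and under clipping the actual variance is slightly below $\frac1{16}$. We would state this convention explicitly. We would also note that the EM estimates are treated as the true IVC parameters; otherwise the result holds only approximately, up to estimation error.
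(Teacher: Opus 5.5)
Your proof is correct and follows the same route as the paper. Both compute the mean $\frac12$ and variance $\frac1{16}$ of the affine part $Z=\frac{t-(\mu_{\text{IVC}}-2\sigma_{\text{IVC}})}{4\sigma_{\text{IVC}}}$, then note that the clipping window $[0,1]$ is exactly $\frac12\pm2\sigma_Z$. Your version is slightly more careful in two places: you explicitly invoke closure of Gaussians under affine maps, whereas the paper infers normality from the two moments alone, and you make precise that $\mathcal{N}_{[0,1]}$ denotes the censored law with atoms of mass $\Phi(-2)$ at each endpoint.
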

\begin{proof}
Let the intensity $t$ of the non-contrasted tissue follow a normal distribution $\mathcal{N}(\mu_{\text{IVC}}, \sigma^2_{\text{IVC}})$. The normalisation operator $\nu(t)$ is defined in Equation \ref{eq:norm}

Define the linear transformation $Z = \frac{t-(\mu_{\text{IVC}}-2\sigma_{\text{IVC}})}{4\sigma_{\text{IVC}}}$. The expected value and variance of $Z$ are:
$$
\mathbb{E}[Z] = \frac{\mathbb{E}[t] - \mu_{\text{IVC}} + 2\sigma_{\text{IVC}}}{4\sigma_{\text{IVC}}} = \frac{\mu_{\text{IVC}} - \mu_{\text{IVC}} + 2\sigma_{\text{IVC}}}{4\sigma_{\text{IVC}}} = \frac{1}{2} 
$$
$$
 \text{Var}(Z) = \text{Var}\left( \frac{t}{4\sigma_{\text{IVC}}} \right) = \frac{\sigma^2_{\text{IVC}}}{16\sigma^2_{\text{IVC}}} = \frac{1}{16}$$
Thus, $Z$ follows $\mathcal{N}(\frac{1}{2}, \frac{1}{16})$. The clipping operator $\min(1, \max(0, Z))$ maps the range $[\mu_Z - 2\sigma_Z, \mu_Z + 2\sigma_Z] = [0, 1]$ to the unit interval, effectively normalizing the intensity distribution across varying clinical acquisition protocols.
\end{proof}

This effectively enforces conditional domain invariance, significantly reducing inter-patient intensity variability.

\subsection{Anatomical Prior}\label{subsec:prior}
In the context of AAA the thrombus typically surrounds the lumen, but that is not the only useful feature: calcifications (C) may also be present along the boundaries of the sac, a feature recognized in clinical guidelines \cite{wanhainen2024editor}. As these calcifications have very high HU values, they distinguish thrombus from surrounding tissues, defining the true boundaries of the aneurysm. Using the previously introduced GMM and a defined region of interest $\Omega_{\text{ROI}} \subset \Omega$ (either between the kidneys and the iliac arteries or within centimeters from the localised aorta), we generate an anchor set $M = \{ \mathbf{v} \in \Omega_{\text{ROI}}\mid x_\mathbf{v} \ge \tau^*\}$ that highlights both L and C near the aneurysm. Here $\tau^*$ is the minimum HU value separating the IVC and L  between $\mu_{\text{IVC}}$ and $\mu_{\text{L}}$.
With the anchor set we compute the prior $p$ for the thrombus via a euclidean distance transform:
\begin{equation}
\label{eq:prior}
\begin{aligned}
p(\mathbf{v})
&=
\exp\!\left(
-\frac{d(\mathbf{v},M)^2}{2\sigma^2_{\text{spatial}}}
\right)
\quad \forall \mathbf{v} \in \Omega \\ &\text{where} \quad
d(\mathbf{v},M)
=
\min_{\mathbf{m}\in M}
\|\mathbf{v}-\mathbf{m}\|_2
\end{aligned}
\end{equation}
This reflects the idea that the likelihood of a voxel being thrombus increases as $d(\mathbf{v},M)\rightarrow 0$.
Defining the prior as in \eqref{eq:prior} does not account for the intensity of individual voxels. As a result, a voxel located very close to the prior map may still be assigned a high likelihood even if its intensity is inconsistent with non-contrast blood. Moreover, when there is no clear separation between contrast and non-contrast intensities, a voxel lying close to the mean of the non-contrast distribution may still be ambiguously classified due to significant aleatoric uncertainty between the two distributions. These limitations can make the prior unreliable.
We introduce a patient-level confidence measure which modulates the prior based on global image quality and local voxel ambiguity:
\begin{equation}
\begin{aligned}
c_g &= 1 - \exp\!\left(
-\frac{(\mu_{\text{IVC}} - \mu_{\text{L}})^2}
{2(\sigma_{\text{IVC}}^2 + \sigma_{\text{L}}^2)}
\right) \\
c_l(\mathbf{v}) &=
\frac{\mathcal{N}_{\text{IVC}}(x_\mathbf{v})}
     {\sum_{k \in \{\text{IVC}, \text{L}\}} \mathcal{N}_k(x_\mathbf{v})}
\cdot
\frac{\mathcal{N}_{\text{IVC}}(x_\mathbf{v})}
     {\max_{\mathbf{v} \in \Omega} \mathcal{N}_{\text{IVC}}(x_\mathbf{v})}
\end{aligned}
\end{equation}

where
\[
\mathcal{N}_k(x_\mathbf{v})
:= \mathcal{N}(x_\mathbf{v}\mid\mu_k,\sigma_k^2),
\qquad
k\in\{\text{IVC},\text{L}\}.
\]
The confidence measure $c(\mathbf{v})=c_g\cdot c_l(\mathbf{v})$ quantifies how reliable the spatial prior is for each voxel. The global term $c_g$ captures how well the non-contrast and contrast intensity distributions are separated across the whole image, 
while the local term $c_l(\mathbf{v})$ checks whether the intensity at voxel $\mathbf{v}$ matches the expected non-contrast profile.
Thus, $c(\mathbf{v})$ is only high when the prior is supported by the global image quality and the local intensity. When $c(\mathbf{v}) \approx 0$, the prior can be effectively ignored, preventing it from misleading the model.
 Although prior and confidence are defined at the voxel level, they are applied to the whole CTA.

The voxel-level functions $p(\cdot)$ and $c(\cdot)$ are applied across the entire CTA to produce volumetric prior and confidence maps. For notational simplicity, for a given CTA $\mathbf{x}$, we write $\mathbf{p} = p(\mathbf{x})$ and $\mathbf{c} = c(\mathbf{x})$.

\subsection{Uncertainty-Gated Anatomical Attention}
We propose the UGAA module to inject the spatial prior into the latent representation between the encoder and decoder of the model, while strictly conditioning its influence on its estimated reliability.
In the standard ungated architecture, the encoder $f$ maps the input CTA $\textbf{x} \in \mathbb{R}^{h \times w \times d}$ to a latent representation $\textbf{z} = f(\textbf{x}) \in  \mathbb{R}^{k \times h' \times w' \times d'}$ which is processed by the decoder to give the final prediction $\textbf{y} = q(\textbf{z}) \in [0,1]^{h \times w \times d}$.
Given that the prior and confidence maps are defined at the input (rather than latent) level, we introduce an operator $r:\mathbb{R}^{h \times w \times d} \rightarrow \mathbb{R}^{h' \times w' \times d'}$ which aligns the resolution of the input with the resolution of the latent space. Practically, $r$ denotes the trilinear operator to the latent resolution.
We then define a convolutional layer $\Psi_{\boldsymbol{\theta}}$ parametrized by $\boldsymbol{\theta}$ which maps the prior and confidence maps to attention coefficients in the latent space
$\alpha(\textbf{p}, \textbf{c}) = \sigma \left( \Psi_{\boldsymbol{\theta}}([r(\textbf{p}) || r(\textbf{c})])\right)  \in [ 0,1 ]^{h' \times w' \times d'}$,
where $r(\textbf{p})$ and $r(\textbf{c})$ are the downsampled prior and confidence maps, respectively, $||$ is the element-wise concatenation operator and $\sigma$ denotes the sigmoid function. 
We don't want to inject into the latent representation an unreliable prior (i.e. $\textbf{c} \approx \textbf{0}$), as this can possibly inject wrong information that could lead the model to make erroneous predictions. For this reason we define the effective attention map $\textbf{a}$ by gating the attention coefficient with the downsampled confidence score:
$\textbf{a} = \alpha(\textbf{p},\textbf{c}) \odot r(\textbf{c})$
where $\odot$ is the Hadamard operator. We can now compute prior-informed latent representations as:
\begin{equation}\label{eq:zt}
\tilde{\textbf{z}}_{i} = \textbf{z}_{i} \odot \big( 1 + \textbf{a} \big) \quad i \in \{1, \dots, k\}
\end{equation}
The definition of $\tilde{\mathbf{z}}$ makes it possible to generate prior influence maps, showing exactly where and how much the anatomical prior affected the network’s decision.
With the following proposition we explain that module defaults to standard processing when priors are unreliable, ensuring safe behavior in clinical settings.
\begin{prop}
    The perturbation between the gated features $\tilde{\mathbf{z}}$ and bottleneck features $\mathbf{z}$ is bounded channel-wise by the downsampled confidence map $\|\tilde{\mathbf{z}}_i - \mathbf{z}_i\| \;\le\; \|\mathbf{z}_i\| \cdot \|r(\mathbf{c})\| \quad \forall i\in\{1,\dots,k\}$.
\end{prop}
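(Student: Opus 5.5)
The plan is to write the perturbation explicitly and then bound it with a norm inequality for the Hadamard product. By \eqref{eq:zt}, for each channel $i$ we have $\tilde{\mathbf{z}}_i - \mathbf{z}_i = \mathbf{z}_i \odot \mathbf{a} = \mathbf{z}_i \odot \alpha(\mathbf{p},\mathbf{c}) \odot r(\mathbf{c})$. The multiplicative identity term cancels exactly, so the whole difference is carried by the gated attention map $\mathbf{a}$.

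Next I would use the range of $\alpha$. Because $\alpha = \sigma(\Psi_{\boldsymbol{\theta}}(\cdot))$ takes values in $[0,1]$ entrywise, at every latent voxel $\mathbf{u}$ we get $|(\tilde{\mathbf{z}}_i - \mathbf{z}_i)_{\mathbf{u}}| \le |(\mathbf{z}_i)_{\mathbf{u}}|\,|r(\mathbf{c})_{\mathbf{u}}|$. We also know $r(\mathbf{c})$ is nonnegative. Indeed $c = c_g c_l$ with $c_g \in [0,1)$ and $c_l \in [0,1]$, since $c_l$ is a product of a posterior responsibility and a normalised density. Trilinear interpolation is a convex combination of grid values, so $r(\mathbf{c}) \in [0,1]^{h'\times w'\times d'}$.

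The final step is to pass from this entrywise bound to norms, using submultiplicativity of the Hadamard product: $\|\mathbf{u}\odot\mathbf{w}\| \le \|\mathbf{u}\|\,\|\mathbf{w}\|$. For the Frobenius/$\ell_2$ norm this follows from $\sum_j u_j^2 w_j^2 \le (\sum_j u_j^2)(\sum_j w_j^2)$. It also holds for $\ell_p$ norms, and trivially for $\ell_\infty$. Combining this with the entrywise bound gives $\|\tilde{\mathbf{z}}_i - \mathbf{z}_i\| \le \|\mathbf{z}_i \odot r(\mathbf{c})\| \le \|\mathbf{z}_i\|\,\|r(\mathbf{c})\|$.

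The main obstacle is choosing and stating the norm. The stated inequality is only meaningful once a norm is fixed, and submultiplicativity fails for some choices, such as the spectral norm of matricised tensors without extra care. I would therefore fix the entrywise $\ell_2$ norm on the flattened latent tensor. I would also add the sharper remark that the $\ell_\infty$ version $\|\tilde{\mathbf{z}}_i-\mathbf{z}_i\|_2 \le \|\mathbf{z}_i\|_2\,\|r(\mathbf{c})\|_\infty$ holds as well. That version makes the intended message explicit: when $r(\mathbf{c}) \approx \mathbf{0}$, the module reduces to the ungated architecture, $\tilde{\mathbf{z}} \approx \mathbf{z}$.
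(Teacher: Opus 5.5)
Your proposal is correct and is essentially the paper's proof: you cancel the identity term to get $\tilde{\mathbf{z}}_i-\mathbf{z}_i=\mathbf{z}_i\odot\alpha(\mathbf{p},\mathbf{c})\odot r(\mathbf{c})$, drop $\alpha$ using $0\le\alpha\le 1$, and finish with the Hadamard-product inequality; your entrywise bound and choice of the entrywise $\ell_2$ norm also make explicit the norm monotonicity that the paper uses silently when it drops $\alpha$. One correction to your side remark: Hadamard submultiplicativity does hold for the spectral norm, since $A\odot B$ is a submatrix of $A\otimes B$, and what actually fails there is the monotonicity step, e.g.\ $\bigl\|\bigl(\begin{smallmatrix}1&1\\1&-1\end{smallmatrix}\bigr)\odot\bigl(\begin{smallmatrix}1&1\\1&0\end{smallmatrix}\bigr)\bigr\|_2=\tfrac{1+\sqrt{5}}{2}>\sqrt{2}=\bigl\|\bigl(\begin{smallmatrix}1&1\\1&-1\end{smallmatrix}\bigr)\bigr\|_2$.
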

\begin{proof}
    From Equation \ref{eq:zt} we have that the prior-informed latent representation is 
    $$
    \tilde{\textbf{z}}_{i} = \textbf{z}_{i} \odot \big( 1 + \textbf{a} \big)
    $$
    By isolating the perturbation and subtracting the original latent representation $\mathbf{z}_i$ from both sides we obtain
    $$
    \mathbf{\tilde{z}}_i - \mathbf{z}_i = \mathbf{z}_i \odot \textbf{a} \Rightarrow \mathbf{\tilde{z}}_i - \mathbf{z}_i = \mathbf{z}_i \odot (\alpha(\textbf{p},\textbf{c}) \odot r(\textbf{c}))
    $$
    We take the norm, and since $0 \le \alpha(\textbf{p},\textbf{c}) \le 1$, we can establish an upper bound, dropping $\alpha$ from the product
    $$
        \|\tilde{\mathbf{z}}_i - \mathbf{z}_i\| \;\le\; \|\mathbf{z}_i \odot r(\textbf{c)}\|
    $$
    
    Finally, by the Hadamard-product inequality,
    $$
    \|\mathbf{z}_i\odot r(\mathbf{c})\|
    \le
    \|\mathbf{z}_i\|\,\|r(\mathbf{c})\|
    $$
    
    we have
    $$
    \|\tilde{\mathbf{z}}_i-\mathbf{z}_i\|
    \le
    \|\mathbf{z}_i\|\,\|r(\mathbf{c})\|
    $$
\end{proof}
Consequently, if the confidence map vanishes (i.e. $r(\mathbf{c})=\mathbf{0}$), UGAA defaults to an identity mapping $\tilde{\mathbf{z}}=\mathbf{z}$, recovering the baseline architecture. Thus, unreliable priors can be completely suppressed rather than forcibly injected into the latent representation.

Beyond inference guarantees, UGAA promotes training stability by modulating gradient flow:
\begin{prop}
    The gradient of the loss with respect to the attention parameters $\boldsymbol{\theta}$ is explicitly scaled by the confidence map:
\[\frac{\partial \mathcal{L}}{\partial \boldsymbol{\theta}} = \left( \left[ \sum_{i=1}^k \frac{\partial \mathcal{L}}{\partial \tilde{\mathbf{z}}_i} \odot \mathbf{z}_i \right] \odot r(\mathbf{c}) \right) \frac{\partial \alpha(\textbf{p},\textbf{c})}{\partial \boldsymbol{\theta}}
\]
so that when confidence is low ($r(\mathbf{c}) \to \mathbf{0}$), the magnitude of gradient updates vanishes.
\end{prop}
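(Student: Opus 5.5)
The plan is a chain-rule computation through the gated update \eqref{eq:zt}. The key structural observation is that $\boldsymbol{\theta}$ enters the forward pass only through the attention coefficients $\alpha(\mathbf{p},\mathbf{c})$. Neither the encoder features $\mathbf{z}=f(\mathbf{x})$ nor the maps $\mathbf{p}$, $\mathbf{c}$ and their downsampled versions $r(\mathbf{p})$, $r(\mathbf{c})$ depend on $\boldsymbol{\theta}$. The prior and confidence are computed analytically from the GMM and the TotalSegmentator output, and $r$ is a fixed trilinear operator. So the dependency path is $\boldsymbol{\theta}\to\alpha\to\mathbf{a}\to\tilde{\mathbf{z}}_i\to\mathcal{L}$, and
\[
\frac{\partial\mathcal{L}}{\partial\boldsymbol{\theta}}=\sum_{i=1}^k \frac{\partial\mathcal{L}}{\partial\tilde{\mathbf{z}}_i}\,\frac{\partial\tilde{\mathbf{z}}_i}{\partial\mathbf{a}}\,\frac{\partial\mathbf{a}}{\partial\alpha}\,\frac{\partial\alpha}{\partial\boldsymbol{\theta}} .
\]

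First compute the two inner Jacobians. Because $\tilde{\mathbf{z}}_i=\mathbf{z}_i\odot(1+\mathbf{a})$ is elementwise in the spatial index, $\partial\tilde{\mathbf{z}}_i/\partial\mathbf{a}$ is diagonal with entries $\mathbf{z}_i$. Likewise $\mathbf{a}=\alpha\odot r(\mathbf{c})$ gives a diagonal $\partial\mathbf{a}/\partial\alpha$ with entries $r(\mathbf{c})$. Diagonal Jacobians act on a row vector by Hadamard multiplication, so contracting with the upstream gradient gives, per channel, $\frac{\partial\mathcal{L}}{\partial\tilde{\mathbf{z}}_i}\odot\mathbf{z}_i\odot r(\mathbf{c})$. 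The factor $r(\mathbf{c})$ does not depend on $i$, so the channel sum $\sum_i$ passes inside and $r(\mathbf{c})$ can be pulled out. This yields the bracketed spatial map $\bigl(\sum_i \frac{\partial\mathcal{L}}{\partial\tilde{\mathbf{z}}_i}\odot\mathbf{z}_i\bigr)\odot r(\mathbf{c})$.

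Finally, multiply this spatial map, viewed as a row vector over the $h'w'd'$ latent voxels, by the Jacobian $\partial\alpha/\partial\boldsymbol{\theta}$, which has size $h'w'd'\times|\boldsymbol{\theta}|$. That product is exactly the stated formula. The same Jacobian also contains the sigmoid derivative $\sigma'(\cdot)$ and the convolution's dependence on $[r(\mathbf{p})\,\|\,r(\mathbf{c})]$, but those need not be expanded. For the vanishing claim, apply Cauchy–Schwarz/operator-norm bounds:
\[
\Bigl\|\frac{\partial\mathcal{L}}{\partial\boldsymbol{\theta}}\Bigr\|\le \|r(\mathbf{c})\|_\infty\,\Bigl\|\sum_i \tfrac{\partial\mathcal{L}}{\partial\tilde{\mathbf{z}}_i}\odot\mathbf{z}_i\Bigr\|\,\Bigl\|\tfrac{\partial\alpha}{\partial\boldsymbol{\theta}}\Bigr\| .
\]
Here $\|\partial\alpha/\partial\boldsymbol{\theta}\|$ is bounded, since $\sigma'\le 1/4$ and $\Psi_{\boldsymbol{\theta}}$ is linear in $\boldsymbol{\theta}$ with inputs in $[0,1]$. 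The right-hand side therefore tends to $0$ as $r(\mathbf{c})\to\mathbf{0}$, provided the upstream gradients and features stay bounded.

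The main obstacle is not the algebra but stating the assumptions honestly. The identity holds only if $\mathbf{c}$ is treated as a constant input with respect to $\boldsymbol{\theta}$, which is true here, and only along the direct path through $\alpha$. The vanishing conclusion needs $\partial\mathcal{L}/\partial\tilde{\mathbf{z}}_i$ and $\mathbf{z}_i$ to stay bounded as $\mathbf{c}\to\mathbf{0}$. I would state this as a mild regularity assumption on the decoder and encoder, for example a Lipschitz loss and decoder on a bounded input domain. The remaining care is in the tensor bookkeeping, interpreting the final product as a contraction over spatial locations.
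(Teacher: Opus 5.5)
Your proposal is correct and follows essentially the same route as the paper's proof. Both use the chain rule along $\boldsymbol{\theta}\to\alpha\to\mathbf{a}\to\tilde{\mathbf{z}}_i\to\mathcal{L}$, pick up the elementwise factors $\mathbf{z}_i$ and $r(\mathbf{c})$, and pull $r(\mathbf{c})$ out of the channel sum. Where the paper is informal, your version is more careful: you read the final product as a contraction over latent voxels with the Jacobian (the paper writes it as a Hadamard product with $\partial\alpha/\partial\boldsymbol{\theta}$), and you make the vanishing claim rigorous with an explicit norm bound that assumes $\mathbf{z}_i$ and $\partial\mathcal{L}/\partial\tilde{\mathbf{z}}_i$ stay bounded.
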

\begin{proof}
Given the gated feature representation $\tilde{\mathbf{z}}_i = \mathbf{z}_i \odot (1 + \mathbf{a})$, where $\mathbf{a} = \alpha(\mathbf{p}, \mathbf{c}) \odot r(\mathbf{c})$, we derive the gradient of the loss $\mathcal{L}$ with respect to the attention parameters $\boldsymbol{\theta}$.

Using the chain rule, the gradient is given by:
$$
\frac{\partial \mathcal{L}}{\partial \boldsymbol{\theta}} = \sum_{i=1}^k \frac{\partial \mathcal{L}}{\partial \tilde{\mathbf{z}}_i} \odot \frac{\partial \tilde{\mathbf{z}}_i}{\partial \boldsymbol{\theta}}
$$
First, we compute the partial derivative of the feature update $\tilde{\mathbf{z}}_i$ with respect to the attention map $\mathbf{a}$:
$$
\frac{\partial \tilde{\mathbf{z}}_i}{\partial \mathbf{a}} = \frac{\partial}{\partial \mathbf{a}} \left( \mathbf{z}_i + \mathbf{z}_i \odot \mathbf{a} \right) = \mathbf{z}_i
$$
Next, we calculate the partial derivative of the attention map $\mathbf{a}$ with respect to the parameters $\boldsymbol{\theta}$:
$$
\frac{\partial \mathbf{a}}{\partial \boldsymbol{\theta}} = \frac{\partial}{\partial \boldsymbol{\theta}} \left( \alpha(\mathbf{p}, \mathbf{c}) \odot r(\mathbf{c}) \right) = \frac{\partial \alpha(\mathbf{p}, \mathbf{c})}{\partial \boldsymbol{\theta}} \odot r(\mathbf{c})
$$
Substituting these back into the primary chain rule expression and applying the associativity of the Hadamard product:
$$
\begin{aligned}
\frac{\partial \mathcal{L}}{\partial \boldsymbol{\theta}} &= \sum_{i=1}^k \left( \frac{\partial \mathcal{L}}{\partial \tilde{\mathbf{z}}_i} \odot \frac{\partial \tilde{\mathbf{z}}_i}{\partial \mathbf{a}} \right) \odot \frac{\partial \mathbf{a}}{\partial \boldsymbol{\theta}} \\
&= \sum_{i=1}^k \left( \frac{\partial \mathcal{L}}{\partial \tilde{\mathbf{z}}_i} \odot \mathbf{z}_i \right) \odot \left( \frac{\partial \alpha(\mathbf{p}, \mathbf{c})}{\partial \boldsymbol{\theta}} \odot r(\mathbf{c}) \right) \\
&= \left( \left[ \sum_{i=1}^k \frac{\partial \mathcal{L}}{\partial \tilde{\mathbf{z}}_i} \odot \mathbf{z}_i \right] \odot r(\mathbf{c}) \right) \odot \frac{\partial \alpha(\mathbf{p}, \mathbf{c})}{\partial \boldsymbol{\theta}}
\end{aligned}
$$
\end{proof}
This effectively prevents the module from learning from unreliable regions or overfitting to noise. 

\section{Experiments}
We address the following research questions:
\begin{itemize}
    \item[] \textbf{Q1} Does normalisation reduce domain shift?
    \item[] \textbf{Q2} Does UGAA improve segmentation?
    \item[] \textbf{Q3} Does the complete framework outperform existing methods?
\end{itemize}
\paragraph{Datasets} We evaluate on three CTA sets: ID (200 train / 20 test scans, single center, single vendor and scanner, 1.25 mm slice thickness) \cite{siriapisith2022retrospective}, OOD (26 scans, three centers, three vendors, five scanners, 0.62–3.0 mm slice thickness, from 1996 onwards) \cite{clark2013cancer,radl2022avt,wilson2013vascular}, and Clinical (50 scans, nine centers, three vendors, nine scanners, 0.62–2.0 mm slice thickness, 2007–2026), approved by the Ethics Committee (EC approval no. A1044; protocol code: TANGO). 
\paragraph{Architectures} To ensure a comprehensive evaluation, we trained and assessed a heterogeneous set of architectures representing different methodological paradigms. These span standard encoder-decoder convolutional networks (U-Net \cite{unet}, V-Net \cite{vnet}), residual and dense architectures (SegResNet \cite{segresnet}, ResNetMed \cite{resnetmed3d}, DenseVoxNet \cite{densevoxnet}), attention-based and transformer models (AG-DSV-UNet \cite{agdsv}, Swin-UNETR \cite{swinunetr}), as well as the auto-configuring SOTA model nn-UNet \cite{nnunet} and a recent domain-specific method of Siriapisith et al. \cite{siriapisith2022retrospective}.

For U-Net, AG-DSV-UNet, ResNetMed, DenseVoxNet, V-Net, and the proposed method of Siriapisith et al., the hyperparameter settings reported in that work were adopted to ensure a faithful reproduction of the original experimental setup. For nn-UNet, the default self-configured settings were used. Swin-UNETR and SegResNet were trained with AdamW optimization (initial learning rate $10^{-3}$, weight decay $10^{-5}$) and DiceCE loss.

\paragraph{A1: The normalisation helps in reducing domain shift}
To evaluate the efficacy of the proposed normalisation method against domain shift, we computed pairwise Wasserstein distance (W-dist) between intensity distributions of the segmented thrombus across patients within the ID, OOD and Clinical cohorts. As Table \ref{tab:norm} shows, in the original HU space, the OOD cohort exhibits a higher baseline discrepancy in mean W-dist compared to the ID cohort (16.66 vs. 10.91), while the Clinical cohort presents the largest distributional variability overall (25.02). Following the proposed normalisation, the mean pairwise distance collapsed substantially across all cohorts, reducing to 0.05, 0.06, and 0.09 for the ID, OOD, and Clinical sets, respectively. Consequently, the large intensity discrepancies observed before normalisation were markedly reduced, despite the heterogeneous nature of the Clinical cohort. The variance between scans also stabilised considerably, with the standard deviation decreasing from 7.64, 9.68, and 18.17 in the original HU space to 0.03, 0.03, and 0.06 after normalisation for the ID, OOD, and Clinical cohorts, respectively.
\begin{table}
\centering
\caption{Domain invariance evaluation using pairwise Wasserstein distances. The mean, standard deviation, and 95\% bootstrap confidence intervals are reported for the original Hounsfield Unit (HU) space and our normalised [0,1] space across ID and OOD splits.}\label{tab:norm}
{
\setlength{\tabcolsep}{2pt}
\begin{tabular}{lccc}
\toprule
\textbf{Set} & \textbf{Mean $\pm$ Std} & \textbf{95\% Bootstrap C.I.} & \textbf{\# Pairs} \\
\midrule
\multicolumn{4}{c}{\textbf{Before Normalisation}} \\
\midrule
 ID & 10.91 $\pm$ 7.64 & [10.80, 11.02] & 19900 \\
 OOD & 16.66 $\pm$ 9.68 & [15.60, 17.76] & 325 \\
 Clinical & 25.02 $\pm$ 18.17 & [24.00, 26.04] & 1225 \\

\midrule
\multicolumn{4}{c}{\textbf{After Normalisation}} \\
\midrule

 ID & 0.05 $\pm$ 0.03 & [0.05, 0.05] & 19900 \\
 OOD & 0.06 $\pm$ 0.03 & [0.06, 0.07] & 325 \\
 Clinical & 0.09 $\pm$ 0.06 & [0.08, 0.09] & 1225 \\

\bottomrule

\end{tabular}
}
\end{table}
\paragraph{A2: Anatomical priors and uncertainty gating improve segmentation robustness} We investigate how each component of the proposed framework contributes to performance under both ID and OOD settings. In particular, we evaluate (i) patient-specific normalisation, (ii) anatomical priors, and (iii) UGAA by conducting an ablation study (Table \ref{tab:ablation_combined}). 
In the ID setting, all configurations perform similarly, with DSC values of 0.974 (baseline), 0.974 (+$\nu$), 0.976 (+$p$), and 0.974 (+UGAA), and corresponding HD95 values of 6.6, 1.3, 1.2, and 1.3. 
In contrast, the OOD setting reveals substantial performance differences. While normalisation alone provides negligible improvement over the baseline (DSC 0.763 $\rightarrow$ 0.768; HD95 51.0 $\rightarrow$ 16.2), incorporating anatomical priors leads to a pronounced gain in segmentation accuracy (DSC 0.768 $\rightarrow$ 0.911; HD95 16.2 $\rightarrow$ 9.0). 
Finally, the introduction of UGAA further improves OOD performance (DSC 0.911 $\rightarrow$ 0.938; HD95 9.0 $\rightarrow$ 5.6). This gain is achieved while maintaining stable ID performance, indicating that selectively modulating prior influence based on confidence prevents degradation in ambiguous or unreliable anatomical regions. 


\begin{table}\label{tab:ablation_combined}
\centering
\caption{Ablation study results. Starting from a baseline, we progressively integrated the components in the following order: 1) patient-specific normalisation, 2) the anatomical prior, and 3) the complete UGAA module}
\begin{tabular}{lcccc}
\toprule
& \multicolumn{2}{c}{\textbf{In-Distribution}} & \multicolumn{2}{c}{\textbf{Out-of-Distribution}} \\
\cmidrule(lr){2-3} \cmidrule(lr){4-5}
\textbf{Method} & \textbf{DSC} $\uparrow$ & \textbf{HD95} $\downarrow$ & \textbf{DSC} $\uparrow$ & \textbf{HD95} $\downarrow$ \\
\midrule

Baseline & .924 $\pm$ 0.014 & 6.6 $\pm$ 2.9 & .763 $\pm$ 0.101 & 51.0 $\pm$ 18.7 \\
+ $\nu$  & .974 $\pm$ 0.004 & 1.3 $\pm$ 0.1 & .768 $\pm$ 0.123 & 16.2 $\pm$ 8.7 \\
+ $p$    & \textit{\textbf{.976 $\pm$ 0.003}} & \textit{1.2 $\pm$ 0.1} & .911 $\pm$ 0.037 & 9.0 $\pm$ 6.4 \\
+ UGAA   & .974 $\pm$ 0.004 & 1.3 $\pm$ 0.1 & \textit{\textbf{.938 $\pm$ 0.026}} & \textit{\textbf{5.6 $\pm$ 5.1}} \\

\bottomrule
\end{tabular}
\begin{tablenotes}
\small
\item Note: Best performance highlighted in italic. Boldface indicates statistical significant improvement of the best performing method over the runner up, assessed using the Wilcoxon signed-rank test ($p<0.01$)
\end{tablenotes}
\end{table}
\paragraph{A3: UGAA performs either equally or greater than SOTA baselines }
We use DSC to measure overall model performance, whereas to quantify accuracy around the boundaries we instead use the 95\% Hausdorff distance (HD95).
Each method was systematically evaluated across the ID, OOD, and Clinical cohorts to assess both absolute segmentation accuracy and robustness against domain shift. Visual comparisons are shown in Figure \ref{fig:fig_results}.

In the ID setting, all approaches achieve high segmentation accuracy with comparable DSC values, ranging from 0.910 to 0.971. Still, our method achieves the highest DSC (0.974) together with nn-UNet (0.971), and consistently yields improved boundary accuracy, reaching an HD95 of 1.3 compared to 1.6 for nn-UNet and higher values for all other baselines. However, improvements over the second best architecture (nn-UNet) are not statistically significant.

In the OOD setting, all methods exhibit a performance degradation. In particular, DSC drops substantially for most methods (e.g., 0.888 for VNet and 0.916 for Swin-UNETR), whereas our framework maintains the highest performance (DSC 0.937) with a significantly lower HD95 of 5.6 compared to 14.6 (Swin-UNETR), 18.7 (VNet), and 35.6 (Siriapisith et al.). This decline is substantially more pronounced in all approaches with the exception of Swin-UNETR and UGAA, but our framework maintains significantly higher segmentation accuracy and better boundary delineation ($p < 0.001$). 

Evaluation on the Clinical dataset further highlights the generalization capability of the proposed method, even in the presence of heavy artifacts (Figure \ref{fig:fig_results} last row). In this highly heterogeneous setting, our framework achieves the highest mean DSC (0.953), outperforming VNet (0.936), UNet (0.931), and Siriapisith et al. (0.929), while also achieving the lowest HD95 (1.8 vs. 2.8, 11.6, and 6.1 respectively). More importantly, it shows substantially reduced variance compared to all other baselines, indicating more stable predictions across centers and scanner models.

\begin{figure*}[t]
\centering
\includegraphics[width=1\textwidth]{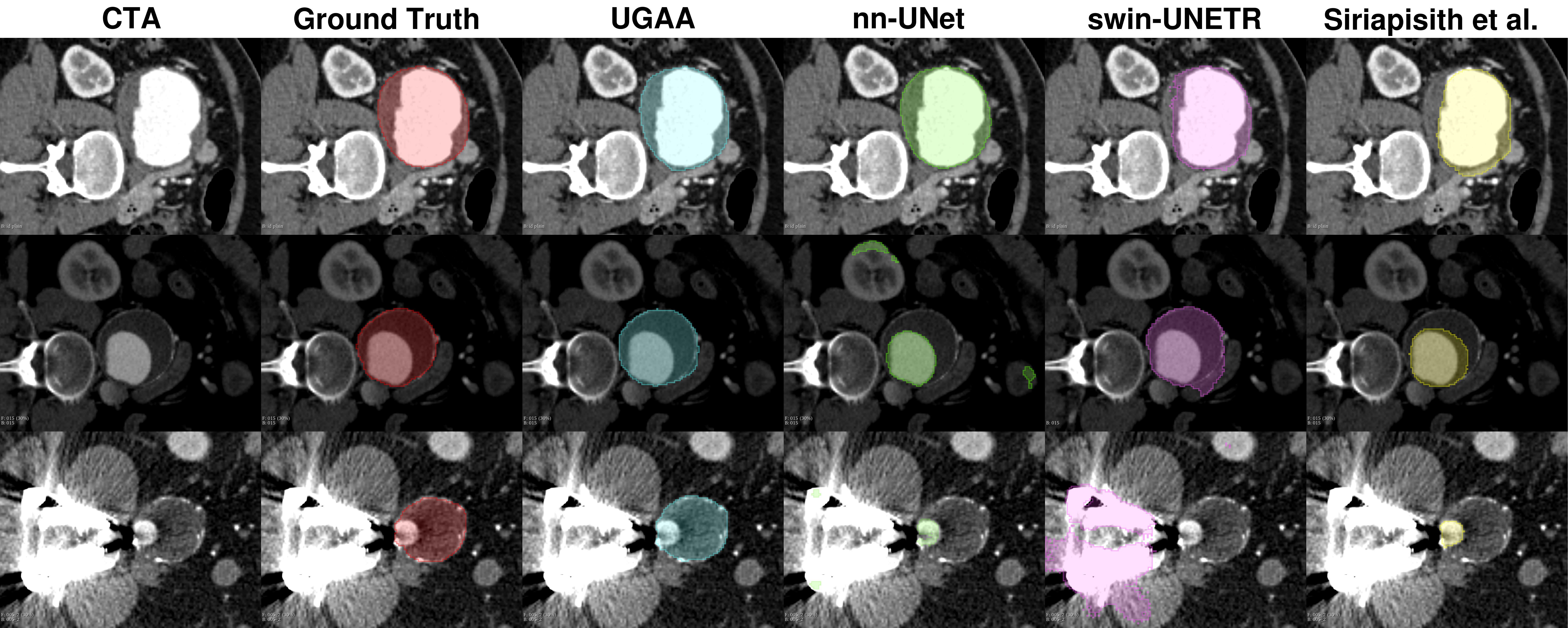}
\caption{Qualitative comparison on one CTA case from each cohort. The first, second, and third rows correspond to the ID, OOD, and Clinical cohorts, respectively. Red contours denote the ground truth, cyan contours denote the predictions of our method, and the remaining colors correspond to the predictions of the best-performing baseline in each cohort (nn-UNet for the ID cohort, Swin-UNETR for the OOD cohort, and Siriapisith et al. for the Clinical cohort).}\label{fig:fig_results}
\end{figure*}

\begin{figure*}[t]
\centering
\includegraphics[width=1\textwidth]{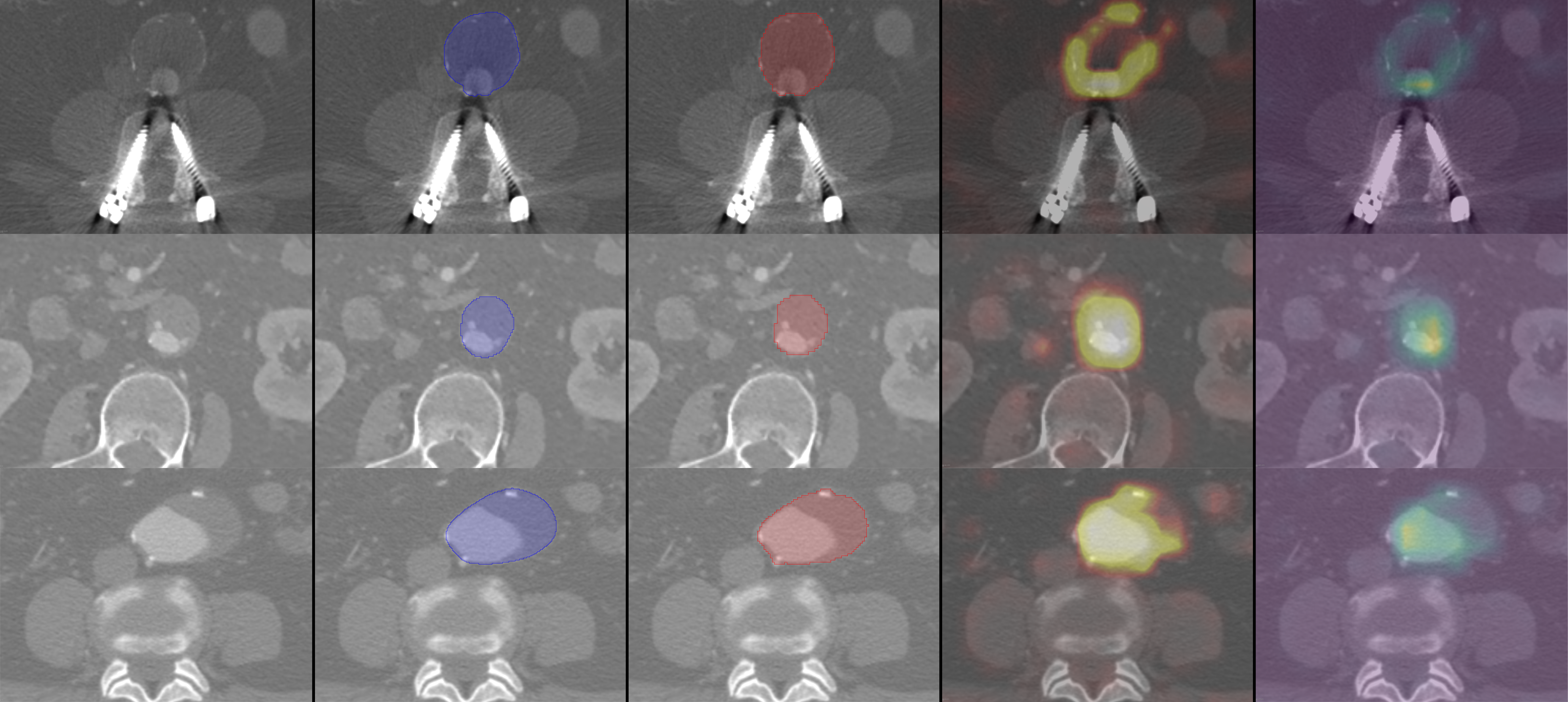}
\caption{Qualitative analysis of the interpretability of UGAA on representative clinical CTA cases. Blue contours denote the ground truth, while red contours denote the predictions of our method. The fourth column shows the attention map produced by the UGAA module, and the last column shows the prior influence map, highlighting the regions where the prior contributed to the prediction and the magnitude of its contribution.}\label{fig:fig_interpretability}
\end{figure*}
\begin{table}
\centering
\caption{Test sets results across ID, OOD, and Clinical sets}\label{tab:perf}
{
\setlength{\tabcolsep}{2pt}
\begin{tabular}{clcc}
\toprule
\textbf{Set} & \textbf{Method} & \textbf{DSC} $\uparrow$ & \textbf{HD95} $\downarrow$ \\
\midrule

\multirow{10}{*}{\shortstack{ID\\$n=20$}} 
& UNet & .942 $\pm$ 0.023 & 6.3 $\pm$ 4.8 \\
& SegResNet & .924 $\pm$ 0.014 & 6.6 $\pm$ 2.9 \\
& AG-DSV-UNet & .945 $\pm$ 0.015 & 9.1 $\pm$ 7.9 \\
& ResNetMed & .910 $\pm$ 0.019 & 5.5 $\pm$ 1.7 \\
& DenseVoxNet & .925 $\pm$ 0.016 & 17.0 $\pm$ 9.9 \\
& VNet & .962 $\pm$ 0.007 & 2.2 $\pm$ 0.6 \\
& Swin-UNETR & .950 $\pm$ 0.014 & 5.0 $\pm$ 3.4 \\
& nn-UNet & .971 $\pm$ 0.005 & 1.6 $\pm$ 0.4 \\
& Siriapisith et al. & .966 $\pm$ 0.008 & 6.8 $\pm$ 7.2 \\
& UGAA & \textit{.974} $\pm$ \textit{0.004} & \textit{1.3} $\pm$ \textit{0.1} \\
\midrule

\multirow{10}{*}{\shortstack{OOD\\$n=26$}} 
& UNet & .825 $\pm$ 0.057 & 47.7 $\pm$ 17.2 \\
& SegResNet & .763 $\pm$ 0.103 & 51.0 $\pm$ 18.5 \\
& AG-DSV-UNet & .667 $\pm$ 0.085 & 51.3 $\pm$ 13.5 \\
& ResNetMed & .731 $\pm$ 0.088 & 37.3 $\pm$ 13.2 \\
& DenseVoxNet & .778 $\pm$ 0.068 & 67.5 $\pm$ 15.9 \\
& VNet & .888 $\pm$ 0.044 & 18.7 $\pm$ 11.8 \\
& Swin-UNETR & .916 $\pm$ 0.022 & 14.6 $\pm$ 8.1 \\
& nn-UNet & .721 $\pm$ 0.142 & 69.7 $\pm$ 37.4 \\
& Siriapisith et al. & .890 $\pm$ 0.042 & 35.6 $\pm$ 16.3\\
& UGAA & \textit{\textbf{.937} $\pm$ \textbf{0.027}} & \textit{\textbf{5.6} $\pm$ \textbf{5.1}}\\
\midrule

\multirow{10}{*}{\shortstack{Clinical\\$n=50$}}  
& UNet & .931 $\pm$ 0.020 & 11.6 $\pm$ 6.6 \\
& SegResNet &.890 $\pm$ 0.025 & 18.5 $\pm$ 9.2 \\
& AG-DSV-UNet & .895 $\pm$ 0.030 &  11.4 $\pm$ 6.0 \\
& ResNetMed & .828 $\pm$ 0.047 & 17.2 $\pm$ 8.2 \\
& DenseVoxNet & .854 $\pm$ 0.047 & 36.3 $\pm$ 10.5 \\
& VNet & .936 $\pm$ 0.019 & 2.8 $\pm$ 1.0 \\
& Swin-UNETR & .917 $\pm$ 0.019 & 15.0 $\pm$ 5.8 \\
& nn-UNet & .870 $\pm$ 0.038 & 5.6 $\pm$ 1.7 \\
& Siriapisith et al. & .929 $\pm$ 0.038 & 6.1 $\pm$ 4.8\\
& UGAA & \textit{\textbf{.953} $\pm$ \textbf{0.008}} & \textit{\textbf{1.8} $\pm$ \textbf{0.4}}\\

\bottomrule

\end{tabular}
}
\begin{tablenotes}
\small
\item Note: Best performance highlighted in italic. Boldface indicates statistical significant improvement of the best performing method over the runner up, assessed using the Wilcoxon signed-rank test ($p<0.01$)
\end{tablenotes}
\end{table}

\section{Discussion}
The normalisation compresses inter-subject intensity differences by two orders of magnitude in Wasserstein space. These findings suggest that the proposed patient-specific normalisation substantially reduces inter-patient intensity variability, which helps to achieve near-optimal performance in a homogeneous imaging setting. 
However, as reported in the ablation study in the OOD set, patient-specific normalisation alone is insufficient. The additional components provide substantial gains, while the effect of normalisation remains limited. 
Results highlight that anatomical priors are the main driver of OOD robustness: the UGAA module further enhances stability by modulating prior influence based on uncertainty, preventing degradation when anatomical cues are unreliable. This suggests improved robustness to variations in slice thickness and anatomical regions, which introduce locations not observed during training.

Although calcifications are highly prevalent in AAA patients \cite{maier2010impact,bartstra2021abdominal} and provide strong cues to the model due to their high attenuation values, our proposed framework does not exclusively rely on their presence. Specifically, the anchor set $M$ includes both calcifications, if present, and the contrast-enhanced lumen, defining a solid prior even in patients without visible calcifications. Additionally, the confidence-gated formulation in Section \ref{subsec:prior} reduces the prior influence whenever there is uncertainty in the anatomy. 

One limitation of the proposed work is the dependence on TotalSegmentator. 
In fact, the normalisation and prior generation steps depend on the presence of lumen and IVC segmentation. Although TotalSegmentator demonstrated robust performance across diverse imaging settings \cite{wasserthal2023totalsegmentator} and for our work we don't need precise segmentations from it, errors may propagate to the downstream stages of the pipeline. However, in our medium-sized dataset we never encountered such a problem. 

Regarding the dataset size, although $\approx 100$ CTA scans is a typical dataset size in the ILT literature \cite{guo2025automatic}, and the OOD and clinical cohorts encompass multiple vendors, scanners, acquisition protocols, and decades of imaging technology, larger multi-center studies are required to further assess performance across rarer aneurysm morphologies. The results of this study are nevertheless encouraging and support further validation in larger cohorts.

An important aspect of the proposed framework is its contribution to interpretability within a clinical workflow, particularly for surgical planning in abdominal AAA cases. Beyond producing a binary thrombus segmentation, the model explicitly separates visual evidence from anatomically driven priors through the UGAA mechanism, enabling a more structured explanation of predictions, as Figure \ref{fig:fig_interpretability} shows. In practice, this allows clinicians to inspect not only the final segmentation, but also the underlying spatial prior map and the corresponding confidence modulation that determined where anatomical guidance was trusted or suppressed. For surgeons, this distinction is particularly relevant in pre-operative planning, where understanding why a region is classified as thrombus can be just as important as the classification itself. The prior influence map highlights anatomical proximity to key vascular structures such as the lumen and calcified regions, providing an intuitive geometric reference that aligns with established clinical reasoning. At the same time, the confidence map acts as a quality indicator, signaling regions where the model relies primarily on learned image features due to ambiguous or unreliable anatomical cues. This dual representation supports a more transparent decision-making process, potentially increasing trust in automated outputs and facilitating their integration into multidisciplinary surgical pipelines. In complex or borderline cases, such as heavily calcified or low-contrast aneurysms, this interpretability may also assist clinicians in identifying regions where manual review is warranted, thereby positioning the model as a decision-support tool rather than a standalone predictor.

\section{Conclusions}
This work investigated three research questions concerning robust intraluminal thrombus segmentation in abdominal aortic aneurysms from CTA images.

The proposed normalisation technique substantially reduces domain shift. By aligning vascular intensity distributions using subject-specific anatomical references, the proposed normalisation reduced inter-patient Wasserstein distances by more than two orders of magnitude across ID, OOD, and Clinical cohorts, thereby improving intensity consistency across heterogeneous acquisition protocols.

Furthermore, we showed that anatomical priors significantly improve segmentation robustness, particularly in OOD settings. The proposed module further improved performance by selectively incorporating prior information only when supported by sufficient confidence, preventing unreliable anatomical cues from degrading predictions.

The complete framework achieves state-of-the-art performance across all evaluation cohorts. In addition to attaining the highest segmentation accuracy and lowest boundary errors, the proposed method exhibited superior robustness to multi-center domain shifts and maintained stable performance in highly heterogeneous clinical data.

Beyond accuracy, the framework provides interpretable prior influence and confidence maps, enabling clinicians to understand when predictions are driven by anatomical evidence and when the model relies primarily on image appearance. This transparency may facilitate the integration of automated thrombus segmentation into clinical workflows and surgical planning.

Future works will investigate the incorporation of negative anatomical priors and the application of the proposed uncertainty-gated prior mechanism to other medical image segmentation tasks. Overall, the results indicate that patient-specific anatomical guidance combined with uncertainty-aware integration offers an effective and clinically relevant strategy for robust thrombus segmentation in real-world CTA data.
\bibliographystyle{unsrtnat}
\bibliography{references}  






\end{document}